\newcommand{\defi}{\stackrel{\mbox{\upshape\tiny def}}{=}}
\newtheorem{theorem}{Theorem}
\newtheorem{definition}{Definition}
\newtheorem{axiom}{Axiom}
\title{Towards A Logical Account of Epistemic Causality}
\author{Shakil M. Khan \qquad\qquad Mikhail Soutchanski
\institute{Department of Computer Science\\ Ryerson University \\ Toronto, Canada}
\email{\quad \{shakilmkhan,mes\}@scs.ryerson.ca}
}
\begin{document}
\maketitle

\begin{abstract}
Reasoning about observed effects and their causes is important in multi-agent 
contexts. While there has been much work on causality from an 
objective standpoint, causality from the point of view of some particular 
agent has received much less attention. In this paper, we address this 
issue by incorporating an epistemic dimension to an existing formal model 
of causality. We define what it means for an agent to know the causes of an 
effect. Then using a counterexample, we prove that epistemic causality is a 
different notion from its objective counterpart.
\end{abstract}
\section{Introduction}
Research on actual causality involves finding in a given narrative (trace) the event that caused an effect. Pearl \cite{Pearl98,Pearl00} was a pioneer to lead a computational enquiry in actual causality. 
The research was later continued by Halpern and Pearl \cite{Halpern00,HalpernP05} and others \cite{EiterL02,Hopkins05,HopkinsP07,Halpern15,Halpern16}. 
Unfortunately, as argued by Glymour et al. \cite{GlymourDGERSSTZ10}, most of these accounts are developed by analyzing a handful of simple examples, and then validated relative to our intuition for 
these examples, a process which G{\"{o}}{\ss}ler et al.\ \cite{GosslerSS17} referred to as TEGAR (i.e.\ Textbook Example Guided Analysis Refinement). As such, even after multiple revisions, these definitions continue to 
suffer from various conceptual problems such as the early preemption problem and the over-determination problem. For instance, despite claims to the contrary, the definitions given in \cite{Halpern16} suffer 
from the problem of preemption, which occurs when two competing events try to achieve the same effect and the latter of these fails to do so as the earlier one has already achieved the effect (see \cite{Weslake15} and \cite{BeckersV18} for a discussion).
\par
In an attempt to address these issues, Batusov and Soutchanski \cite{BatusovS17,BatusovS18} recently proposed a new definition of actual causality that is based on a well developed 
and expressive formalization of actions and change, namely the situation calculus \cite{McCarthyH69,Reiter01}. The definition is derived from first principles and does not follow a TEGAR scheme. 
One of the advantages of their work is that it allows one to reason about actual causes of \emph{quantified} effects. 
As argued in \cite{BatusovS18}, their definition (a version of which can be found in Section 3 below) does not suffer from preemption and can handle the more problematic examples well; 
e.g.\ both the disjunctive and the conjunctive versions of the well-known ``Forest Fire'' example \cite{HalpernP05,Halpern16} are properly handled. In our previous work \cite{KhanS18}, we showed 
that this formalization of actual causality has some intuitive properties. In particular, we proved that the computed causes of any given effect and the ``causal chain'' (relative to a 
``causal setting'', as defined in Section 3 below) are \emph{unique} for any given model, and \emph{sufficient}, but can be \emph{unnecessary} in the sense that removing 
the relevant events from the trace may still bring about the desired effect. The latter allows for other non-cause events --events that were previously preempted by the some relevant events-- 
to bring about the effect. We also proved that this formalization of causal analysis is modular since causal analysis can be performed by examining only the relevant subset of the system specification. 
Finally, we discussed how this definition can be utilized for further processing the reconstructed event traces obtained from a discrete event system (DES) based diagnoser.
\par
Note that, an important advantage of this framework is that it is based on a well developed formal theory of actions, namely the situation calculus, and as such it automatically inherits many of the 
advantages of the underlying framework. In this paper, we present our work in progress on extending 
the notion of actual causality with one such aspect, namely previous work on knowledge within the situation calculus. This allows for a first person's perspective of causality, i.e.\ causality relativised to the mental states of an agent, specially to that of her knowledge.\footnote{Handling 
belief revision complicates the framework somewhat, and therefore we focus on knowledge rather than belief.} 
Equipped with such technical machinery, agents can then reason about the causes of change in each other's mental states. This reasoning ability can be useful in distributed systems, 
be they systems of interacting agents or networked hosts, where each subsystem/agent has to take individual actions and engage in communication with other subsystems/agents. 
Finally, we envision using this formalism for developing, among other things, notions such as trust, moral responsibility, and blameworthiness within serious games/multi-agent settings. 
\par
The main contribution of this paper is two-fold. First, we incorporate a notion of knowledge within an expressive formal framework for causal analysis. To this end, we define a notion of knowledge 
of the actual causes of an observed effect, i.e.\ knowledge relative to a \emph{causal setting} (see below). This unleashes the power of causal analysis by allowing agents to reason about the causes 
of observed effects, including each other's knowledge (and goals). Secondly, using a simple counter-example, we formally show that, as expected, epistemic causality is a different notion than its 
objective counterpart in the sense that in different epistemic alternatives, different causes may bring about the same effect even if the narrative remains the same. 
\par
While doing this, we also identify a limitation of the formalism proposed in \cite{BatusovS17} and discuss how one can address this issue. 
To be specific, the definition in \cite{BatusovS17} assumes that all actions are fully observable. While we do not solve this issue in this paper, we discuss how this constraint can be relaxed by 
incorporating belief and belief revision instead of knowledge within this framework. 
\par    
The paper is organized as follows. In the next section, we outline the situation calculus. In Section 3, we give a version of the definition of actual achievement causes proposed by Batusov and 
Soutchanski \cite{BatusovS18}. In Section 4, we review previous work on knowledge in the situation calculus. Then in Section 5, we propose a model of epistemic causality and using an example show how this notion differs from the original notion of causality. Finally, we summarize our results and conclude in Section 6.
\section{The Situation Calculus}
The situation calculus \cite{McCarthyH69} is a popular formalism for modeling and reasoning about dynamic systems. Here, we use a  
version as described by Reiter \cite{Reiter01}. There are three basic sorts in the language, \emph{situation}, \emph{action}, 
and a catch-all \emph{object} sort. A situation represents a sequence of actions. A special constant $S_0$ is used to denote the initial situation where no actions has yet been performed. 
Here, and subsequently, we use lower-case arguments for variables and upper-case arguments to represent constants. However, function and predicate symbols start with lower-case letters. 
There is a distinguished binary function symbol $do$, where $do(a, s)$ denotes the successor situation to $s$ resulting from performing the action $a$. 
For example, if $drive(agt,i,j)$ stands for an autonomous driving agent $agt$'s action of driving the car from point $i$ to point $j$, then the situation term $do(drive(Agt_1,I_1,J_1), S_0)$ denotes the situation resulting from $Agt_1$'s driving the car from $I_1$ to $J_1$ when the world is in situation $S_0$. Also, $do(drive(Agt_1,J_1,K_1), do(turn(Agt_1,J_1), do(drive(Agt_1,I_1,J_1), S_0)))$ is a situation denoting the world history consisting of the following sequence of actions:
$[drive(Agt_1,I_1,J_1), turn(Agt_1,J_1),$ $drive(Agt_1,J_1,K_1)].$ 
Thus the situations can be viewed as branches in a tree, where the root of the tree is 
$S_0$ and the edges represent actions. $do([a_1,\cdots,a_n],s)$ is used to denote the complex situation term obtained by consecutively performing 
$a_1,\cdots,a_n$ starting from $s$. Also, the notation $s\sqsubset s'$ means that situation $s'$ can be reached from situation $s$ by executing 
a sequence of actions. $s\sqsubseteq s'$ is an abbreviation of $s\sqsubset s'\vee s= s'.$ Relations whose truth values vary from situation to 
situation are called relational fluents, and are denoted by predicate symbols taking a situation term as their last argument. There is a special 
predicate $Poss(a,s)$ used to state that action $a$ is possible in situation $s$. Finally, a situation $s$ is called \emph{executable} if every 
action in its history was possible in the situation where it was performed:
\[executable(s)\defi\forall a',s'.\;do(a',s')\sqsubseteq s\rightarrow Poss(a',s').\]
\par
Following Reiter, we use a basic action theory (\textbf{BAT}) $\mathcal{D}$ that includes the following set of axioms: (1) action precondition axioms 
$\mathcal{D}_{apa}$, one per action $a$ characterizing $Poss(a, s)$, (2) successor-state axioms $\mathcal{D}_{ssa}$, one per fluent, that succinctly 
encode both effect and frame axioms and specify exactly when the fluent changes, (3) initial state axioms $\mathcal{D}_{S_0}$ describing what is true 
in $S_0$, (4) unique name axioms for actions $\mathcal{D}_{una}$, and (5) domain-independent foundational axioms $\Sigma$ describing the structure of situations.
%
\subsection*{Example}
We use a simple autonomous/driverless car domain as our running example. We have at least one such car/agent, $C$. 
An agent $c$ can drive from intersection $i$ to intersection $j$ (and turn at intersection $i$) by executing the $drive(c,i,j)$ (and $turn(c,i)$, resp.) action.\footnote{For brevity, we ignore the turn direction, the traffic light requirements, etc., although we could have easily modeled these.} The geometry of the intersections is captured using the non-fluent relation $connected(i,j),$ which states that there is a street from intersection $i$ to $j$. Unfortunately, due to poor design choices, the agents are vulnerable to over-the-air attacks by hackers. In particular, an agent $c$'s Turn Collision Avoidance System (T-CAS) can be easily corrupted by executing the $hack(c)$ action. If its T-CAS is corrupted, turning a car damages it. Finally, initially all the cars are undamaged and their T-CAS are uncorrupted. 
\par
There are three fluents in this domain, $at(c,i,s)$, $corrupted(c,s)$, and $damaged(c,s)$, which mean that the agent $c$ is at location $i$ in situation $s$, $c$'s T-CAS is corrupted in $s$, and $c$ is damaged in $s$, respectively. 
\par
We now give the domain-dependent axioms specifying this example domain. First, the preconditions for $drive(c,i,j)$, $turn(c,i)$, and $hack(c)$ can be specified using action precondition axioms (\textbf{APA}) as follows (henceforth, all free variables in a sentence are assumed to be universally quantified): 
\begin{eqnarray*}
&&\hspace{-7 mm}(a).\;Poss(drive(c,i,j),s)\leftrightarrow at(c,i,s)\wedge i\neq j \wedge connected(i,j),\\
&&\hspace{-7 mm}(b).\;Poss(turn(c,i),s)\leftrightarrow at(c,i,s),\\
&&\hspace{-7 mm}(c).\;Poss(hack(c),s).
\end{eqnarray*}
That is, $(a)$ an agent $c$ can drive from intersection $i$ to $j$ in some situation $s$ if and only if $c$ is at intersection $i$ in situation $s$, $i$ and $j$ refer to different intersections, and there is a street connecting $i$ and $j$; 
$(b)$ $c$ can turn at intersection $i$ in situation $s$ if and only if $c$ is at $i$ in $s$; and $(c)$ a hacker can always hack $c$.   
\par
Moreover, the following successor-state axioms (\textbf{SSA}) specify how exactly the fluents $at$, $corrupted$, and $damaged$ changes value when an action $a$ happens in some situation $s$:
\begin{eqnarray*}
&&\hspace{-7 mm}(d).\;at(c,i,do(a,s))\leftrightarrow 
(\exists j(a=drive(c,j,i))
\lor (at(c,i,s)\land\neg\exists j(a=drive(c,i,j)))),\\
&&\hspace{-7 mm}(e).\;corrupted(c,do(a,s))\leftrightarrow (a=hack(c)\lor corrupted(c,s)),\\
&&\hspace{-7 mm}(f).\;damaged(c,do(a,s))\leftrightarrow ((corrupted(c,s)\land\exists i(a=turn(c,i)))\lor damaged(c,s)).
\end{eqnarray*}
That is, $(d)$ an agent $c$ is at location $i$ in the situation resulting from executing some action $a$ in situation $s$ (i.e.\ in $do(a,s)$) if and only if $a$ 
refers to $c$'s action of driving from location $j$ to $i$, 
or she was already at $i$ in $s$ and $a$ is not the action of her driving to another location $j$; 
$(e)$ $c$'s T-CAS is corrupted after action $a$ happens in situation $s$ if and only if $a$ is the action of hacking $c$ or her T-CAS was already corrupted in $s$; and $(f)$ $c$ is damaged after action 
$a$ happens in situation $s$ if and only if $c$'s T-CAS was corrupted in $s$ and $a$ refers to the action of turning $c$ at some intersection $i$, or $c$ was already damaged in $s$.   
%
%
\par
Furthermore, the following initial state axioms say that initially $(g)$ all the agent's T-CAS are uncorrupted,  $(h)$ all agents are undamaged, and $(i)$ they are located at intersection $I$:
\begin{eqnarray*}
&&\hspace{-7 mm}(g).\;\forall c(\neg corrupted(c,S_0)),\hspace{10 mm}(h).\;\forall c(\neg damaged(c,S_0)),\hspace{10 mm}(i).\;\forall c(at(c,I,S_0)).
\end{eqnarray*}
%
\indent
We assume for simplicity three intersections connected with two streets:
\begin{eqnarray*}
&&\hspace{-7 mm}(j).\;\forall i,j.\;connected(i,j)\leftrightarrow((i=I\wedge j=J)\vee(i=J\wedge j=I)\vee (i=J\wedge j=K)\vee (i=K\wedge j=J)).
\end{eqnarray*}
\indent
Also, for simplicity and illustration, we assume the domain closure axiom $(k)$ for the intersections, stating that there are only three intersections $I, J,$ and $K$ in this domain:
\begin{eqnarray*}
&&\hspace{-7 mm}(k).\;\forall i(i=I\vee i=J\vee i=K).
\end{eqnarray*}
However, we do not require a domain closure axiom for cars/agents, as their number can be unknown. 
Finally, we need unique names axioms $(l)$, stating that $I, J,$ and $K$ refer to three different intersections: 
\begin{eqnarray*}
&&\hspace{-7 mm}(l).\;I\neq J\wedge I\neq K\wedge J\neq K.
\end{eqnarray*}
Also the following unique names for actions axioms (\textbf{UNA}) say that $(m)$ $drive$, $turn$, and $hack$ refer to different actions, and $(n)$ two actions with the same function symbol refer to the same action if their arguments are the same (these are necessary for the above successor-state axioms to work properly):
\begin{eqnarray*}
&&\hspace{-7 mm}(m).\;\forall c_1,c_2,i,j,k(drive(c_1,i,j)\neq turn(c_2,k)\wedge drive(c_1,i,j)\neq hack(c_2)\wedge turn(c_1,k)\neq hack(c_2)),\\
&&\hspace{-7 mm}(n).\;\forall c_1,c_2,i,j,k,l((drive(c_1,i,j)=drive(c_2,k,l)\rightarrow(c_1=c_2\wedge i=k\wedge j=l))\\
&&\hspace{19 mm}\mbox{}\wedge (turn(c_1,i)=turn(c_2,j)\rightarrow(c_1=c_2\wedge i=j))\wedge(hack(c_1)=hack(c_2)\rightarrow(c_1=c_2)).
\end{eqnarray*}
Henceforth, we use $\mathcal{D}_{ac}$ to refer to the above axiomatization of the autonomous car domain.
\subsection*{Regression in the Situation Calculus}
BATs employ \emph{regression}, a powerful reasoning mechanism for answering queries about the future. Given a query ``does $\phi$ 
hold in the situation obtained by performing the ground action $\alpha$ in situation $s$, i.e.\ in $do(\alpha,s)?$'',\footnote{A ground term is one whose 
constituents are ground sub-terms and constants, i.e.\ that contains no variables.} the single-step regression operator $\rho$ transforms it into an 
equivalent query ``does $\psi$ hold in situation $s?$'', eliminating action $\alpha$ by compiling it into $\psi.$ The expression 
$\rho[\phi,\alpha]$ denotes such a logically equivalent query obtained from the formula $\phi$ by replacing each fluent atom $F$ in $\phi$ with the right-hand side of the 
successor-state axiom for $F$ where the action variable $a$ is instantiated with the ground action $\alpha$, and then simplified using unique name 
axioms for actions and constants. One can prove that given a BAT $\mathcal{D}$, a formula $\phi(s)$ \emph{uniform in} $s$ (meaning that it has no 
occurrences of $Poss$, $\sqsubseteq$, other situation terms besides $s$, and quantifiers over situations), and a ground action term $\alpha$, 
we have that $\mathcal{D}\models\forall s.\;\phi(do(\alpha,s))\leftrightarrow\rho[\phi(s),\alpha]$. One can also obtain a similar regression operator 
$\mathcal{R}$ by repetitive recursive application of $\rho$. Reiter \cite{Reiter01} showed that for a \emph{regressable} query $\phi$, 
$\mathcal{D}\models\phi$ if and only if $\mathcal{D}_{una}\cup\mathcal{D}_{S_0}\models\mathcal{R}[\phi]$. Regression thus simplifies 
entailment checking by compiling dynamic aspects of the theory into the query. 
\subsection*{Example (Continued)}
Let us compute $\rho[damaged(C,do(turn(C,K),S^*)),turn(C,K)]$, for some situation $S^*$. From the right-hand side 
of the SSA $(f)$ above and by substituting action variable $a$ by $turn(C,K)$, object variables $c$ by $C$ and $i$ by $K$, and situation variable $s$ by 
$S^*,$ the result of single-step regression $\rho[damaged(C,do(turn(C,$ $K),S^*)),turn(C,K)]$ amounts to 
$(corrupted(C,S^*)\wedge turn(C,K)=turn(C,K))\vee damaged(C,S^*).$ Using the unique names axiom $(n)$ above, the result of $\rho$ can be simplified to 
$corrupted(C,S^*)\vee damaged(C,S^*)$. Thus, in this example $\rho$ allows us to answer the query $damaged(C,do(turn(C,K),S^*))$ relative to situation $do(turn(C,K),S^*)$ by 
reducing it to the equivalent simpler query $corrupted(C,S^*)\vee damaged(C,S^*)$ that only mentions the preceding situation $S^*$ and does not mention the situation $do(turn(C,K),S^*)$.
%
%
\section{Actual Achievement and Maintenance Causes}
Given a trace of events,\footnote{We do not conceptually distinguish between agents' actions and exogenous/nature's events.} 
\emph{actual achievement causes} are some of the events that are behind achieving an effect while \emph{actual maintenance causes} are those which are 
responsible for mitigating the threats to the achieved effect. There can be also cases of subtle interactions of these two. In this section, we review how one can define 
achievement causality in the situation calculus \cite{BatusovS18}. An effect in this framework is a situation calculus formula $\phi(s)$ that is uniform in $s$ and that may include quantifiers over object variables. 
Given an effect $\phi(s),$ the actual causes of $\phi$ are defined relative to a \emph{causal setting} that includes a BAT $\mathcal{D}$ representing the domain 
dynamics, and a ``narrative'' (a trace of events) $\sigma$, representing the ground situation, where the effect was observed. 
\begin{definition}[Causal Setting]
A causal setting is a tuple $\langle\mathcal{D},\sigma,\phi(s)\rangle$, where $\mathcal{D}$ is a BAT, $\sigma$ is a ground 
situation term of the form $do([a_1,\cdots,a_n], S_0)$ with ground action functions $a_1,\cdots,a_n$ such that $\mathcal{D}\models executable(\sigma)$, and $\phi(s)$ 
is a situation calculus formula uniform in $s$ such that $\mathcal{D}\models\phi(\sigma)$.
\end{definition}
As the theory $\mathcal{D}$ does not change, 
we will often suppress $\mathcal{D}$ and simply write $\langle\sigma,\phi(s)\rangle$. 
Also, here we require $\phi$ to hold by the end of the narrative $\sigma,$ and thus ignore the cases where $\phi$ is not achieved by the actions in $\sigma$, since if 
this is the case, the achievement cause truly does not exist.   
\par
Note that since all changes in the situation calculus result from actions, we identify the potential causes of an effect $\phi$ with a set of ground action terms occurring 
in $\sigma$. However, since $\sigma$ might include multiple occurrences of the same action, we also need to identify the situations when these actions were executed. 
Now, the notion of the achievement cause of an effect suggests that if some action $\alpha$ of the action sequence in $\sigma$ triggers the formula $\phi(s)$ to change 
its truth value from false to true relative to $\mathcal{D}$, and if there are no actions in $\sigma$ after $\alpha$ that change the value of $\phi(s)$ back to false, then 
$\alpha$ is the actual cause of achieving $\phi(s)$ in $\sigma$. 
\par
When used together with the single-step regression operator $\rho$, the above interpretation of achievement condition not only identifies the single action that brings 
about the effect of interest, but also captures the actions that build up to it. Intuitively, $\rho[\phi,\alpha]$ specifies the weakest condition that must hold in a previous 
situation (let us call it $\sigma'$) in order for $\phi$ to hold after performing the action $\alpha$ in situation $\sigma'$, i.e.\ in situation $do(\alpha,\sigma')$. Thus, if the 
action $\alpha$ is an achievement cause of $\phi$ in situation $do(\alpha,\sigma')$, then we can use the single-step regression operator $\rho$ to obtain a formula that 
holds at situation $\sigma'$ and constitutes a necessary and sufficient condition for the achievement of $\phi(s)$ via the action $\alpha$. This new formula may have 
an achievement cause of its own which, by virtue of the action $\alpha$, also constructively contributes to the achievement of $\phi$. By repeating this process, we 
can uncover the entire chain of actions that incrementally build up to the achievement of the ultimate effect. At the same time, we must not overlook the conditions 
that make the execution of the action $\alpha$ in situation $\sigma$ even possible, which are conveniently captured by the right-hand side of the 
action precondition axiom for $\alpha$ and may have achievement causes of their own. 
\par
%
The following inductive definition formalizes this intuition. 
Let $\Pi_{apa}(\alpha,\sigma)$ be the right-hand side of the action precondition axiom 
for action $\alpha$ with the situation term replaced by situation $\sigma$. 
\begin{definition}[Achievement Cause]\label{oldd2}
A causal setting $\mathcal{C}=\langle\sigma,\phi(s)\rangle$ satisfies the achievement condition of $\phi$ via the 
situation term $do(\alpha^*,\sigma^*)\sqsubseteq\sigma$ if and only if there is an action $\alpha'$ and situation $\sigma'$ such that: 
\[\mathcal{D}\models\neg\phi(\sigma')\wedge\forall s.\;do(\alpha',\sigma')\sqsubseteq s\sqsubseteq\sigma\rightarrow\phi(s),\]
and either $\alpha^* = \alpha'$ and $\sigma^*=\sigma'$, or 
the causal setting $\langle\sigma',\rho[\phi(s),\alpha']\wedge\Pi_{apa}(\alpha',\sigma')\rangle$ satisfies the achievement condition via the situation term 
$do(\alpha^*,\sigma^*).$
Whenever a causal setting $\mathcal{C}$ satisfies the achievement condition via situation $do(\alpha^*,\sigma^*),$ we say that the action $\alpha^*$ 
executed in situation $\sigma^*$ is an \emph{achievement cause} in causal setting $\mathcal{C}$. 
\end{definition}
Since the process of discovering intermediary achievement causes using the single-step regression operator $\rho$ cannot continue beyond $S_0$, it 
eventually terminates. Moreover, since the narrative $\sigma$ is a finite sequence, the achievement causes of $\mathcal{C}$ also form a finite sequence of 
situation-action pairs, which we call the \emph{achievement causal chain of} $\mathcal{C}$. 
\par
As shown in \cite{BatusovS17}, one can also define the concept of \emph{maintenance causes} by appealing to a counterfactual notion 
of potential threats in the causal setting that can possibly flip the truth value of the effect $\phi$ to false, and actions in the narrative that 
mitigated those threats. In general, actual causes can be either achievement causes or maintenance causes and the causal chain can 
include both. 
However, to keep things simple, in this paper we focus exclusively on actual achievement causes. 
\begin{figure}[t!]
\setlength{\unitlength}{0.14in} 
\centering 
\begin{picture}(44,17) 
\put(2,9) {${\color[rgb]{0.9,0,0}S_0}$}
\put(0,10){\framebox(4,2){$at(I)$}}
\put(0,12){\framebox(4,2){$\neg corrpt$}}
\put(0,14){\framebox(4,2){$\neg dmgd$}}
\put(4,13){\vector(1,0){4}}
\put(4.5,14) {${\color[rgb]{0.9,0,0}drive_{IJ}}$}
\put(10,9) {$S_1$}
\put(8,10){\framebox(4,2){$at(J)$}}
\put(8,12){\framebox(4,2){$\neg corrpt$}}
\put(8,14){\framebox(4,2){$\neg dmgd$}}
\put(12,13){\vector(1,0){4}}
\put(13,14) {$turn_J$}
\put(18,9) {${\color[rgb]{0.9,0,0}S_2}$}
\put(16,10){\framebox(4,2){$at(J)$}}
\put(16,12){\framebox(4,2){$\neg corrpt$}}
\put(16,14){\framebox(4,2){$\neg dmgd$}}
\put(20,13){\vector(1,0){4}}
\put(21,14){${\color[rgb]{0.9,0,0}hack}$}
\put(26,9) {${\color[rgb]{0.9,0,0}S_3}$}
\put(24,10){\framebox(4,2){$at(J)$}}
\put(24,12){\framebox(4,2){$corrpt$}}
\put(24,14){\framebox(4,2){$\neg dmgd$}}
\put(28,13){\vector(1,0){4}}
\put(28.5,14) {${\color[rgb]{0.9,0,0}drive_{JK}}$}
\put(34,9) {${\color[rgb]{0.9,0,0}S_4}$}
\put(32,10){\framebox(4,2){$at(K)$}}
\put(32,12){\framebox(4,2){$corrpt$}}
\put(32,14){\framebox(4,2){$\neg dmgd$}}
\put(36,13){\vector(1,0){4}}
\put(37,14) {${\color[rgb]{0.9,0,0}turn_K}$} 
\put(40.5,9) {$S_5=\sigma1$}
\put(40,10){\framebox(4,2){$at(K)$}}
\put(40,12){\framebox(4,2){$corrpt$}}
\put(40,14){\framebox(4,2){$dmgd$}}
%
%
\put(2,0) {${\color[rgb]{0.9,0,0}S_0^*}$}
\put(0,1){\framebox(4,2){$at(I)$}}
\put(0,3){\framebox(4,2){$corrpt$}}
\put(0,5){\framebox(4,2){$\neg dmgd$}}
\put(4,4){\vector(1,0){4}}
\put(4.5,5) {${\color[rgb]{0.9,0,0}drive_{IJ}}$}
\put(10,0) {${\color[rgb]{0.9,0,0}S_1^*}$}
\put(8,1){\framebox(4,2){$at(J)$}}
\put(8,3){\framebox(4,2){$\textit{corrpt}$}}
\put(8,5){\framebox(4,2){$\neg dmgd$}}
\put(12,4){\vector(1,0){4}}
\put(13,5) {${\color[rgb]{0.9,0,0}turn_J}$}
\put(18,0) {$S_2^*$}
\put(16,1){\framebox(4,2){$at(J)$}}
\put(16,3){\framebox(4,2){$\textit{corrpt}$}}
\put(16,5){\framebox(4,2){$dmgd$}}
\put(20,4){\vector(1,0){4}}
\put(21,5){$hack$}
\put(26,0) {$S_3^*$}
\put(24,1){\framebox(4,2){$at(J)$}}
\put(24,3){\framebox(4,2){$corrpt$}}
\put(24,5){\framebox(4,2){$\textit{dmgd}$}}
\put(28,4){\vector(1,0){4}}
\put(28.5,5) {$drive_{JK}$}
\put(34,0) {$S_4^*$}
\put(32,1){\framebox(4,2){$at(K)$}}
\put(32,3){\framebox(4,2){$corrpt$}}
\put(32,5){\framebox(4,2){$\textit{dmgd}$}}
\put(36,4){\vector(1,0){4}}
\put(37,5) {$turn_K$} 
\put(42,0) {$S_5^*$}
\put(40,1){\framebox(4,2){$at(K)$}}
\put(40,3){\framebox(4,2){$corrpt$}}
\put(40,5){\framebox(4,2){$dmgd$}}
\end{picture}
\caption{Evolution of fluents relative to narrative $\sigma_1$ starting in situations $S_0$ and $S_0^1$} 
\label{fig:evo0} 
\end{figure}
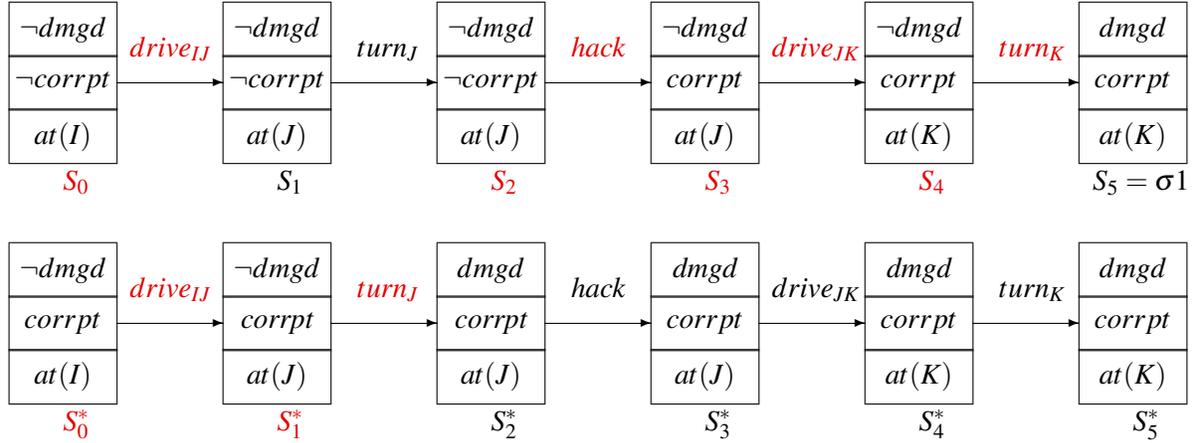
\subsection*{Example (Continued)}
Consider the narrative $\sigma_1=do([drive(C,I,J),turn(C,J),hack(C),drive(C,J,K),turn(C,K)],S_0),$ i.e.\ the agent $C$ drives the car from intersection $I$ to $J$, then $C$ turns at intersection $J$, then $C$'s T-CAS gets hacked, then $C$ drives to intersection $K$, and finally $C$ turns at $K$ (see the top part of Figure \ref{fig:evo0}). We are interested in computing the actual causes of the effect $\phi_1=damaged(C,s)$. Then according to Definition 2, the causal setting $\langle\phi_1,\sigma_1\rangle$ satisfies the achievement condition $\phi_1$ via the situation term $do(turn(C,K),S_4),$ where $S_4=do([drive(C,I,J),turn(C,J),hack(C),drive(C,J,K)],S_0),$ so the action $turn(C,K)$ executed in situation $S_4$ is a primary achievement cause of $damaged(C,s)$.
\par
Moreover, let us compute $\rho[damaged(C,s),turn(C,K)]$ and $Poss(turn(C,K),S_4)$, starting with the former. As shown in Section 2 above, the result of $\rho$ can be simplified to 
$corrupted(C,S_4)\vee damaged(C,$ $S_4)$. 
%
%
Let us now consider $Poss(turn(C,K),S_4)$; from the right-hand side of action precondition axiom $(b)$ above and by replacing object variables $c$ with $C$ and $i$ with $K$ and situation variable $s$ by $S_4$, 
we have $at(C,K,S_4).$ 
Computing $\rho[damaged(C,s),turn(C,K)]\wedge Poss(turn(C,K),S_4)$ thus gives rise to a new causal setting 
$\langle (corrupted(C,s)\vee damaged(C,s))\wedge at(C,K,s),S_4\rangle$. As can be seen in the top part of Figure \ref{fig:evo0}, this setting satisfies the achievement 
condition via the action $drive(C,J,K)$, so $drive(C,J,K)$ executed in $S_3=do([drive(C,I,J),turn(C,J),hack(C)],S_0)$ is a secondary achievement cause. 
Notice that, while at a first glance driving the car $C$ from intersection $J$ to $K$ may not seem like an intuitive cause for the damage to the car, it can be argued that it is actually a cause. In particular, 
for $C$ to be damaged, the $turn(C,K)$ action needs to be executable in situation $S_4$. By the APA $(b)$ above, this means that $C$ must be at intersection $K$ in $S_4$, which can only be achieved by executing the $drive(C,J,K)$ action in situation $S_3$. Thus, given narrative $\sigma_1,$ $drive(C,J,K)$ indeed indirectly contributes to the car $C$'s damage.   
\par
Furthermore, this yields yet another setting: 
\[\langle\rho[(corrupted(C,s)\vee damaged(C,s))\wedge at(C,K,s),drive(C,J,K)]\wedge Poss(drive(C,J,K),S_3),S_3\rangle.\] 
Doing simplifications similar to what we did before, we can arrive at the next setting $\langle(corrupted(C,$ $s)\vee damaged(C,s))\wedge at(C,J,s),S_3\rangle$, 
which meets the achievement condition via the action $hack(C)$ executed in situation 
$S_2=do([drive(C,I,J),turn(C,J)],S_0)$.
\par
And again, this yields another setting: 
\[\langle\rho[(corrupted(C,s)\vee damaged(C,s))\wedge at(C,J,s),hack(C)]\wedge Poss(hack(C),S_2),S_2\rangle,\]
which can be simplified to $\langle at(C,J,s),S_2\rangle$, and meets the achievement condition via $drive(C,I,J)$ executed in situation $S_0$, and the analysis terminates. 
Once again, note that while not obvious, $drive(C,I,J)$ indeed contributes to $C$'s subsequent damage as it makes the preconditions of $drive(C,J,K)$ true, which in turn makes that of $turn(C,K)$ true, whose execution damages the car. 
\par
Thus, the causal chain obtained is as follows: $\{(turn(C,K),S_4),(drive(C,J,K),S_3),(hack(C),S_2),$ $(drive(C,I,J),S_0)\}.$ Note that, since by Axioms $(g)$, $(h)$, and $(i)$, the initial situation 
is completely specified, it can be shown that this causal chain is unique, i.e.\ there are no other causal chains relative to causal setting $\langle\mathcal{D}_{ac},\phi_1(s),\sigma_1\rangle.$\footnote{As 
mentioned above, we showed this uniqueness property earlier in \cite{KhanS18}.}
\par
Note that, in the above example, not all actions from the trace are included in the causal chain, e.g.\ $turn(C,J)$. To see another example of this, consider the narrative/trace $\sigma_2=do(\vec{a},S_0),$ where: 
\[\vec{a}=[drive(C,I,J), turn(C,J), hack(C), hack(C), drive(C,J,K), turn(C,K), turn(C,K), drive(C,K,J)].\] 
Consider the causal setting $\langle\phi_1,\sigma_2\rangle$. We can show that by Definition \ref{oldd2}, the second $hack(C)$ action executed in $S_3=do([drive(C,I,J),turn(C,J),hack(C)],S_0)$ is not a cause 
for this causal setting, nor part of the causal chain relative to this causal setting, since it was preempted by the first $hack(C)$ action. Also, the last two actions, i.e.\ $turn(C,K)$ executed in 
$S_6=do([hack(C), drive(C,J,K), turn(C,K)],S_3)$ and $drive(C,K,J)$ executed in $S_7=do(turn(C,K),S_6)$ 
are irrelevant, since they do not contribute to achieving the effect. In general, there might be several irrelevant actions in between the actions included in the causal chain. It is important 
to realize that our definition can clearly distinguish between irrelevant actions and actions in the causal chain.
\par
We can also handle quantified queries. Consider another example, where we have two agents/cars $C_1$ and $C_2$. 
We want to determine the actual causes of $\phi=\exists c,c'(c\neq c'\wedge damaged(c,s)\wedge damaged(c',s))$ after each car is hacked and turned, along with some unnecessary actions, starting in situation $S_0$, say in the narrative 
$\sigma_3=do([hack(C_1),turn(C_1,I),$ $hack(C_1),drive(C_2,I,J),turn(C_1,I),hack(C_2),turn(C_2,J),$ $drive(C_1,I,J)],S_0).$
In this case, a similar analysis as above can be used to show that according to our definition the achievement causal chain for this example is as follows: 
$[(turn(C_2,J),S_6),(hack(C_2),S_5),$ $(drive(C_2,I,J),S_3),$ $(turn(C_1,I),S_1),(hack(C_1),S_0)],$ 
where $S_1=do(hack(C_1),S_0)$, $S_2=do(turn(C_1,I),$ $S_1),$ etc. 
%
\section{Knowledge in the Situation Calculus}
We now return to our discussion of actual epistemic achievement causes, i.e.\ causes of an effect from the perspective of an agent. To deal with this, we allow the domain specifier 
to model agents' mental states, in particular their knowledge. We start by adapting a simple model of knowledge and knowledge change in the situation calculus in this section. In Section 5, we 
will then extend this notion to handle ``knowing the causes of an effect''. This allows an agent to reason about causes of various effects.   
\par
\subsection*{Knowledge}
Following \cite{Moore85,SchLev03}, we model knowledge using a possible worlds account adapted to the situation calculus. 
To allow for the possibility of incomplete initial knowledge, we can now have multiple initial situations. We use $Init(s)$ to mean that $s$ is an initial situation where no action has happened yet, 
i.e.\ $\neg\exists a,s'.\;s=do(a,s').$ The actual initial situation is denoted by $S_0$. 
Also, $K(agt,s',s)$ is used to denote that in situation $s$, the agent $agt$ thinks that she could be in situation $s'$. 
$s'$ is called a $K$-alternative situation for agent $agt$ in situation $s$. Using $K$, the knowledge of an agent, $Know(agt,\phi,s)$, 
is defined as:\footnote{We will use state formulae within the scope of knowledge. A state formula $\phi(s)$ takes a single situation as argument and is evaluated with respect to that situation. 
We often use $\phi$ to denote a formula whose fluents may contain a placeholder constant $now$ that stands for the situation in which $\phi$ must hold. $\phi(s)$ is the 
formula that results from replacing $now$ with $s$. Where the intended meaning is clear, we sometimes suppress the placeholder.}
\begin{definition}[Knowledge]
$Know(agt,\phi(now),s)\defi\forall s'.\;(K(agt,s',s)\rightarrow\phi(s')).$ 
\end{definition}
That is, an agent $agt$ knows that the formula $\phi$ holds in situation $s$ if $\phi$ holds in all of $agt$'s $K$-accessible situations in $s$. As in \cite{Shapiro05}, who generalized the $Know$ 
and $K$ notation to handle multiple agents by adding an agent argument to them, we adopt this convention; however we will suppress the agent argument when dealing with single agent domains. 
\par
Scherl and Levesque \cite{SchLev03} extended Reiter's successor-state axiom approach to model the effects of actions on agents' knowledge, combining ideas from Reiter and Moore. 
As in \cite{SchLev03}, we require that initial situations can only be $K$-related to other initial situations:
\[\forall agt,s,s'(Init(s)\wedge K(agt,s',s)\rightarrow Init(s')).\]
As we will see later, the successor-state axiom for $K$ ensures that in all the situations that are $K$-accessible from $do(a, s)$, $a$ was the last action performed. 
This along with the above requirement thus implies that all $K$-related situations share the same action history. 
We also constrain $K$ to be reflexive, transitive, and Euclidean in the initial situation to capture the fact that agents' knowledge is true, and that agents have positive and negative introspection:
\begin{eqnarray*}
&&\forall agt,s(Init(s)\rightarrow K(agt,s,s)),\\
&&\forall s(Init(s)\rightarrow\forall agt,s_1,s_2(K(agt,s_1,s)\wedge K(agt,s_2,s_1)\rightarrow K(agt,s_2,s))),\\
&&\forall s(Init(s)\rightarrow\forall agt,s_1,s_2(K(agt,s_1,s)\wedge K(agt,s_2,s)\rightarrow K(agt,s_2,s_1))).
\end{eqnarray*} 
As shown in \cite{SchLev03}, these constraints then continue to hold after any sequence of actions since they are preserved by the successor state axiom for $K$. 
\subsection*{Example (Continued)}
We want to model an agent's knowledge --both about the world and about the actual achievement causes of effects-- in the above autonomous vehicle domain. Assume that the agent initially knows that the car $C$ is undamaged and that $C$ is located at intersection $I$:
\begin{eqnarray*}
&&(o).\;Know(\neg damaged(C,now),S_0),\hspace{10 mm}(p).\;Know(\forall i.\;at(C,i,now)\leftrightarrow i=I,S_0).
\end{eqnarray*} 
Thus $\neg damaged(C)\wedge at(C,I)$ holds in all of her initial $K$-accessible worlds/situations. 
Also assume that the agent does not know anything about the integrity of $C$'s T-CAS: 
\[(q).\;\neg Know(corrupted(C,now),S_0)\wedge\neg Know(\neg corrupted(C,now),S_0).\]
Thus, initially there are at least two possible worlds that are $K$-related to the initial situation 
$S_0$, say $S_0$ and $S_0^*$ (this is depicted in Figure \ref{fig:evo0}). 
Each of these worlds assigns a different interpretation to the corruptedness of the car's T-CAS.  
\subsection*{Knowledge Change}
Scherl and Levesque \cite{SchLev03} showed how to capture the changes in knowledge of agents that result from actions in the successor state axiom for $K$. 
These include knowledge-producing actions that can be either binary sensing actions or non-binary sensing actions. 
A binary sensing action is a sensing action that senses the truth-value of an associated proposition; e.g., the binary sensing action $sense_{isCorrupted}(agt)$ could be performed to sense whether 
the agent/car $agt$'s T-CAS is corrupted or not. On the other hand, non-binary sensing actions refer to sensing actions where the agent senses the value of an associated term; e.g., the hypothetical non-binary sensing action  $computePercentageOfDamage(agt)$ could be performed to get the percentage of damage to the agent $agt$. Following \cite{Lev96}, the information 
provided by a binary sensing action is specified using the predicate $SF(a, s)$, which holds if the action $a$ returns the binary sensing result 1 in situation $s$. 
A \emph{guarded sensed fluent axiom} is used to associate an action with the 
property sensed by this action. For example, one might have a guarded sensed fluent axiom to assert that the action $sense_{isCorrupted}(c)$ tells the agent $c$ whether her T-CAS is corrupted in the situation where it is performed, provided that $c$ is located at the garage:
\[at(c,Garage)\rightarrow (SF(sense_{isCorrupted}(c),s) \leftrightarrow corrupted(c,s)).\]
Similarly for non-binary sensing actions, the term $sff(a, s)$ is used to denote the sensing value returned by the action. For example, the following guarded sensed fluent axiom asserts that the action  $computePercentageOfDamage(c)$ tells $c$ the percentage of damage to the car, provided that $c$ is at the garage:
\[at(c,Garage) \rightarrow (sff(computePercentageOfDamage(c), s) = percentageOfDamageOn(c,s)).\]
\par
The successor-state axiom for $K$ is defined as follows:\footnote{Lesp\'{e}rance \cite{Lesperance03} and later others \cite{Shapiro05} have extended the successor-state axiom for $K$ to support different types of 
communication actions, but for simplicity we do not consider communication actions here.}
\begin{axiom}[Successor-State Axiom for $K$]
\begin{eqnarray*}
&&\hspace{-9 mm}K(agt,s^*,do(a, s))\leftrightarrow\mbox{}\\
&&\hspace{5 mm}\exists s'\;. [K(agt, s',s)\wedge s^*= do(a, s')\wedge Poss(a,s')\\
&&\hspace{13 mm}\mbox{}\wedge((BinarySensingAction(a)\wedge Agent(a) = agt)\rightarrow(SF(a,s')\leftrightarrow SF(a,s)))\\
&&\hspace{13 mm}\mbox{}\wedge((NonBinarySensingAction(a)\wedge Agent(a) = agt)\rightarrow(sff(a, s') = sff(a, s)))].
\end{eqnarray*}
\end{axiom}
\noindent This says that after an action happens, every agent learns that it has happened. Thus, an agent's knowledge is affected by every action in the sense that 
she comes to know that the action was performed. It is assumed that agents know the successor-state axioms for actions, so the agents also acquire knowledge about the effects 
of these actions.\footnote{One consequence of this is that agents are assumed to be aware of all actions that may happen in the environment. This in part allows us to avoid belief revision 
and its difficulties.} 
%
%
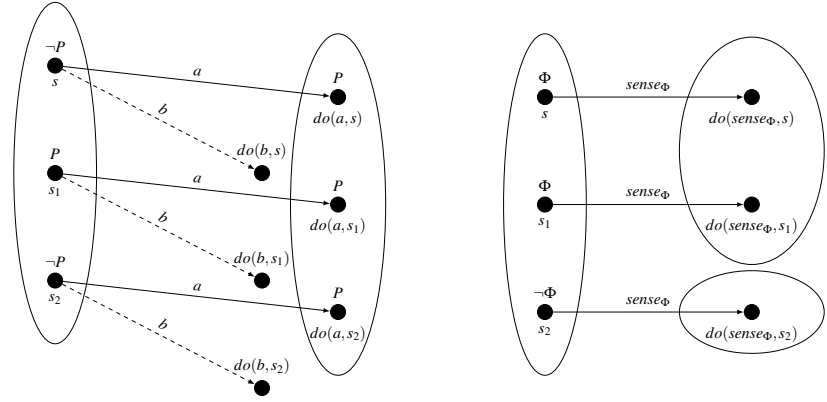
\begin{figure}[t!]
\begin{center}
\scalebox{0.55}{
\tikzset{every loop/.style={min distance=15mm,looseness=10}}
\begin{tikzpicture}[-latex ,auto ,node distance =2.6cm and 5cm, on grid,semithick ,
state/.style ={circle, draw, color=black, fill=black, text=white , minimum width =0.2 cm}]

\node[state] (a11) [label=below:$s$][label=above:$\neg P$]{};
\node[state] (a12) [below =of a11][label=below:$s_1$][label=above:$P$]{};
\node[state] (a13) [below =of a12][label=below:$s_2$][label=above:$\neg P$]{};
\node[fit={(a11)(a13)},draw, ellipse,minimum width=2cm](left){};

\node[state] (b21) [below right=of a11][label=above:{$do(b,s)$}]{};
\node[state] (b22) [below =of b21][label=above:{$do(b,s_1)$}]{};
\node[state] (b23) [below =of b22][label=above:{$do(b,s_2)$}]{};

\draw [dashed] (a11) -- (b21) node [midway, above, sloped] (TextNode21) {$b$};
\draw [dashed] (a12) -- (b22) node [midway, above, sloped] (TextNode22) {$b$};
\draw [dashed] (a13) -- (b23) node [midway, above, sloped] (TextNode23) {$b$};

\node[state] (b11) [above right of =b21][label=below:{$do(a,s)$}][label=above:$P$]{};
\node[state] (b12) [below =of b11][label=below:{$do(a,s_1)$}][label=above:$P$]{};
\node[state] (b13) [below =of b12][label=below:{$do(a,s_2)$}][label=above:$P$]{};
\node[fit={(b11)(b13)},draw, ellipse,minimum width=2.3cm](left){};

\draw (a11) -- (b11) node [midway, above, sloped] (TextNode11) {$a$};
\draw (a12) -- (b12) node [midway, above, sloped] (TextNode12) {$a$};
\draw (a13) -- (b13) node [midway, above, sloped] (TextNode13) {$a$};


\node[state] (a1) [right =of b11][label=below:$s$][label=above:$\Phi$]{};
\node[state] (a2) [below =of a1][label=below:$s_1$][label=above:$\Phi$]{};
\node[state] (a3) [below =of a2][label=below:$s_2$][label=above:$\neg\Phi$]{};
\node[fit={(a1)(a3)},draw, ellipse,minimum width=2cm](left){};

\node[state] (b1) [right =of a1][label=below:{$do(sense_\Phi,s)$}]{};
\node[state] (b2) [below =of b1][label=below:{$do(sense_\Phi,s_1)$}]{};
\node[state] (b3) [below =of b2][label=below:{$do(sense_\Phi,s_2)$}]{};
\node[fit={(b1)(b2)},draw, ellipse,minimum width=3.5cm,minimum height=5.5cm](right){};
\node[fit={(b3)},draw, ellipse,minimum width=3.5cm,minimum height=2cm](right){};

\draw (a1) -- (b1) node [midway, above, sloped] (TextNode1) {$sense_\Phi$};
\draw (a2) -- (b2) node [midway, above, sloped] (TextNode2) {$sense_\Phi$};
\draw (a3) -- (b3) node [midway, above, sloped] (TextNode3) {$sense_\Phi$};

\end{tikzpicture}
}
\end{center}
\caption{An example of knowledge change} 
\label{fig3} 
\end{figure}
Moreover, if the action is a sensing action, the agent performing it acquires knowledge of the associated proposition or term. 
Note that this axiom only handles knowledge expansion, not revision.
\par 
We illustrate the successor-state axiom for $K$ using the scenario in Figure \ref{fig3}. In this figure,  situations are nodes in the graph, and the edges are labeled by actions. Part of the $K$-relation is  represented by the ovals around the nodes. If a situation $s$ appears in the same oval as another situation  $s'$, then $K(agt, s', s)$. Finally, in this figure $s$ denotes the actual situation, i.e.\ the one representing  the true state of the world. First, consider the case for knowledge expansion due to regular actions, as  depicted in the left-hand side of Figure \ref{fig3}. Assume that initially $s$, $s_1$, and $s_2$ are $K$-accessible from  each other. Then after action $a$ happens in situation $s$, according to the successor-state axiom for $K$,  only $do(a, s)$, $do(a, s_1)$, and $do(a, s_2)$ will be accessible from $do(a, s)$, but not $do(b, s_2)$, etc. Thus, in $do(a, s)$ the agent knows that the action $a$ has just happened and knows that its effects hold. If $a$ makes some property $P$ become true in all $K$-accessible situations, then the agent knows that $P$ holds afterwards. Next,  consider the case for knowledge expansion as a result of knowledge producing actions, as illustrated in  the right-hand side of Figure \ref{fig3}. Assume that initially $s$, $s_1$, and $s_2$ are in the same equivalence class wrt $K$, and that $\Phi(s)$, $\Phi(s_1)$, and $\neg\Phi(s_2)$ holds. Then  after the agent senses the value of $\Phi$ in $s$, according to the successor-state axiom for $K$, only  $do(sense_\Phi, s)$ and $do(sense_\Phi, s_1)$ will be $K$-accessible from $do(sense_\Phi, s)$, but not  $do(sense_\Phi, s_2)$. Since $\Phi$ holds in all situations that are $K$-accessible from $do(sense_\Phi, s)$, the agent will thus know that $\Phi$ in $do(sense_\Phi, s)$.
%
\section{Actual Epistemic Achievement Causes}
%
We next formalize a notion of knowledge relative to a causal setting. While obvious, we would like to remind the reader that given a 
causal setting $\mathcal{C}=\langle\mathcal{D},do([\alpha_1,\cdots,\alpha_n],\sigma),\phi(s)\rangle$ and a causal chain $\mathcal{K}=\{(a_1,s_1)\cdots,(a_m,s_m)\}$ of $\mathcal{C},$ the actions $a_1,\cdots,a_m$ 
in $\mathcal{K}$ must come from the trace $\alpha_1,\cdots,\alpha_n$. We start by defining the following concept of $K$-related causal chains.
\begin{definition}\label{new01}
Consider two causal settings $\mathcal{C}_1=\langle\mathcal{D},do([\alpha_1,\cdots,\alpha_n],\sigma_1),\phi(s)\rangle$ and 
$\mathcal{C}_2=\langle\mathcal{D},do([\alpha_1,\cdots,$ $\alpha_n],\sigma_2),\phi(s)\rangle$ that differ only in the situations where their narratives start, i.e.\ in initial situations $\sigma_1$ and $\sigma_2$. 
Assume that $\mathcal{K}_1$ is a (non-empty) achievement causal chain of causal setting $\mathcal{C}_1$, and $\mathcal{K}_2$ that of $\mathcal{C}_2$. 
We say that $\mathcal{K}_1$ and $\mathcal{K}_2$ are $K$-related with respect to the achievement of $\phi(s)$ and action sequence $[\alpha_1,\cdots,\alpha_n]$  
if and only if $\mathcal{K}_1$ is of the form $\{(a_1,s_1^1),\cdots,(a_m,s_m^1)\}$ and $\mathcal{K}_2$ is of the form $\{(a_1,s_1^2),$ $\cdots,(a_m,$ $s_m^2)\}$ 
for some $m>0$, and for all $1\leq i\leq m$, it follows that $\mathcal{D}\models K(agt,s_i^1,s_i^2).$ 
\end{definition}
\noindent 
Thus, two causal chains are $K$-related if they have the same cardinality (i.e.\ equal number of (action,situation) pairs), and for every (action, situation) pairs in these causal chains, 
they only (possibly) differ in the situation term, which are required to be $K$-related. Note that since $K$ is reflexive, this holds trivially when $\mathcal{K}_1=\mathcal{K}_2.$ 
Intuitively, if two causal chains $\mathcal{K}_1$ and $\mathcal{K}_2$ are $K$-related, then as far as the agent is concerned, there is no difference between these two causal chains 
relative to the achievement of the effect $\phi$ via the execution of the sequence of events $[\alpha_1,\cdots,\alpha_n].$ 
\par
Using this, we define the knowledge of a causal chain relative to a causal setting as follows:
\begin{definition}\label{new02}
Given a causal setting $\mathcal{C}=\langle\mathcal{D},\sigma,\phi(s)\rangle,$ where $\sigma=do([\alpha_1,\cdots,\alpha_n],s^*)$ for some initial situation $s^*$ and finite $n>0$, 
an agent knows in situation $\sigma$ that $\mathcal{K}$ is the achievement causal chain of $\mathcal{C}$ if and only if: 
\begin{itemize}
\item $\mathcal{K}$ is an achievement causal chain of $\mathcal{C}$, and 
\item for all $\sigma^*$ such that $\mathcal{D}\models K(\sigma^*,\sigma),$ if $\mathcal{K}^*$ is an achievement causal chain of 
causal setting $\langle\mathcal{D},\sigma^*,\phi(s)\rangle,$ then causal chains $\mathcal{K}$ and $\mathcal{K}^*$ are $K$-related relative to the achievement of $\phi(s)$ and 
the action sequence $[\alpha_1,\cdots,\alpha_n]$.   
\end{itemize}
\end{definition}
\noindent 
That is, an agent knows in situation $\sigma$ that $\mathcal{K}$ is the achievement causal chain of $\mathcal{C}$ if and only if each of the causal chains computed in the worlds that the agent 
considers possible in $\sigma$ is $K$-related wrt the effect and the trace in $\mathcal{C}$. In the following, we will use the term \emph{actual narrative} to refer to any ground situation 
$\sigma=do([\alpha_1,\cdots,\alpha_n],s)$ for some $n>0,$ where $s$ is the actual initial situation, i.e.\ $s=S_0.$  
\par
Note that, Definition \ref{new02} implicitly specifies that if an agent knows in situation $\sigma$ that $\mathcal{K}$ is the achievement causal chain of the causal setting $\mathcal{C}$, then the causal 
chain of setting $\mathcal{C}$ is unique. Since $K$ is reflexive, any causal chain relative to setting $\mathcal{C}$ must be $K$-related wrt the achievement of $\phi$ and trace $[\alpha_1,\cdots,\alpha_n].$ 
Since the setting $\mathcal{C}$ (and as such the situation $\sigma$) does not change, this implies the uniqueness of $\mathcal{K}.$ 
By the same token, when an agent has the knowledge of a causal chain relative to a causal setting in some situation $\sigma$, the causal chain obtained in each of the $K$-alternative situations to 
$\sigma$ (relative to the respective causal settings) is also unique. 
\par
Thus, the above definition specifies the conditions under which an agent can be said to know the actual causes of an observed effect. 
Note that, according to our definition of epistemic causality, it is possible for an agent to not know the causes of a known effect. 
For example, in decentralized voting protocols, a system/agent that cast a decisive majority is the actual cause of the decision, but another system/agent may not know this since each ballot was 
secretly cast. In such cases, reasoning must be performed on different epistemic alternatives separately using the original notion of causality.
\par
To appreciate the power of our formalization, note that equipped with the ability to deal with knowledge of the causes of an effect, agents specified in our framework can now reason about the causes of effects relative to/conditioned on what they know. Also, since agents are introspective relative to their knowledge (i.e.\ they know what they know and know what they don't know), they can also reason about the causes of epistemic effects (changes in their knowledge). Furthermore, incorporating other intentional attitudes within this framework, such as goals and intentions --as was done in \cite{KhanL10})-- allows agents to reason about the causes of change in their motivations (agents are also introspective relative to their motivational attitudes; see \cite{KhanL10} for details). Finally, when multiple agents are involved, this allows agents to reason about each other's knowledge and goals. Therefore, this simple extension unleashes the power of causal analysis and allows agents to reason about the causes of various intentional attitudes. 
\begin{figure}[t!]
\setlength{\unitlength}{0.14in} 
\centering 
\[
\begin{tikzcd}[row sep=20pt,column sep=huge]
S_0\arrow[loop above]{}{K}\arrow{r}{drive(C,I,J)}\arrow{d}{K} 
& S_1\arrow[loop above]{}{K}\arrow{r}{turn(C,J)}\arrow{d}{K} 
& S_2\arrow[loop above]{}{K}\arrow{r}{hack(C)}\arrow{d}{K} 
& S_3\arrow[loop above]{}{K}\arrow{r}{drive(C,J,K)}\arrow{d}{K} 
& S_4\arrow[loop above]{}{K}\arrow{r}{turn(C,K)}\arrow{d}{K} 
& \sigma_1\arrow[loop above]{}{K}\arrow{d}{K}\\
S_0^*\arrow{r}{drive(C,I,J)} 
& S_1^*\arrow{r}{turn(C,J)} 
& S_2^*\arrow{r}{hack(C)} 
& S_3^*\arrow{r}{drive(C,J,K)} 
& S_4^*\arrow{r}{turn(C,K)}
& S_5^*
\end{tikzcd}
\]
\caption{Evolution of the (partial) $K$ relation of the autonomous vehicle agent wrt the actions in $\sigma_1$} 
\label{fig:evo2} 
\end{figure}
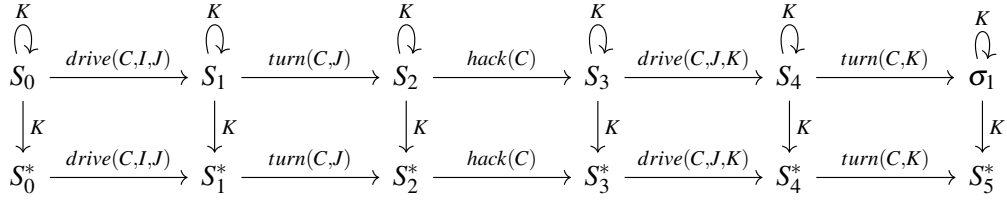
\subsection*{Example (Continued)}
Let $\mathcal{D}_{ac}^K$ refer to $\mathcal{D}_{ac}$ along with the above axiomatization of knowledge, knowledge change, and our agent's (initial) knowledge 
about the autonomous vehicle domain (i.e.\ Axioms $(o)-(q)$). Using Definitions \ref{new01} and \ref{new02}, we can show the following result on epistemic causality 
in our autonomous vehicle domain:
\begin{theorem}
Given causal setting $\mathcal{C}_{ac}=\langle\mathcal{D}_{ac},\sigma_1,\phi_1(s)\rangle$ and its achievement causal chain 
$\mathcal{K}_{ac1}=\{(turn(C,K),S_4),(drive(C,J,K),S_3),(hack(C),S_2),(drive(C,I,J),S_0)\}.$ 
$\mathcal{D}_{ac}^K$ entails that the agent does not know in $\sigma_1$ that $\mathcal{K}_{ac1}$ is the achievement causal chain of causal setting $\mathcal{C}_{ac}.$
 \end{theorem}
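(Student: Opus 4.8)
The plan is to exhibit a single $K$-accessible alternative to the actual narrative in which the achievement causal chain has a \emph{different length} than $\mathcal{K}_{ac1}$; since Definition~\ref{new01} requires $K$-related chains to have equal cardinality, this alternative chain cannot be $K$-related to $\mathcal{K}_{ac1}$, and so by (the negation of) Definition~\ref{new02} the agent fails to know that $\mathcal{K}_{ac1}$ is the causal chain of $\mathcal{C}_{ac}$. The witness is the initial situation $S_0^*$ depicted in Figures~\ref{fig:evo0} and~\ref{fig:evo2}, in which $C$'s T-CAS is \emph{already} corrupted ($corrupted(C,S_0^*)$), while $\neg damaged(C,S_0^*)$ and $at(C,I,S_0^*)$ hold.

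First I would establish that $S_0^*$ is genuinely a $K$-alternative, i.e.\ $\mathcal{D}_{ac}^K\models K(agt,S_0^*,S_0)$. Axioms $(o)$ and $(p)$ force $\neg damaged(C)$ and $at(C,I)$ to hold in every initial $K$-alternative, while axiom $(q)$ --- together with $(g)$, which fixes $\neg corrupted(C,S_0)$ only in the actual initial situation --- forces the existence of at least one initial $K$-alternative in which $corrupted(C)$ holds, and $S_0^*$ is such a situation. I would then propagate $K$-accessibility along the trace using the successor-state axiom for $K$: because none of $drive$, $turn$, $hack$ is a sensing action, the sensing conjuncts are vacuous, so it suffices to check $Poss$ of each action in the alternative world. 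A routine pass over the APAs $(a)$--$(c)$ confirms that $drive(C,I,J)$, $turn(C,J)$, $hack(C)$, $drive(C,J,K)$, $turn(C,K)$ are each possible in $S_0^*,S_1^*,S_2^*,S_3^*,S_4^*$ respectively, yielding $\mathcal{D}_{ac}^K\models K(agt,\sigma_1^*,\sigma_1)$, where $\sigma_1^*=S_5^*=do([\ldots],S_0^*)$.

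The key step is to compute the achievement causal chain $\mathcal{K}^*$ of $\langle\mathcal{D}_{ac},\sigma_1^*,\phi_1(s)\rangle$ and show $|\mathcal{K}^*|=2$. Tracing the SSAs $(d)$--$(f)$ down the bottom row of Figure~\ref{fig:evo0}, $damaged(C,s)$ is false at $S_0^*,S_1^*$ and becomes true at $S_2^*$, because $turn(C,J)$ fires the damage effect on the already corrupted car; hence by Definition~\ref{oldd2} the primary cause is $(turn(C,J),S_1^*)$. Regressing $\phi_1$ through $turn(C,J)$ and conjoining the precondition gives the setting $\langle(corrupted(C,s)\vee damaged(C,s))\wedge at(C,J,s),S_1^*\rangle$, whose condition is false at $S_0^*$ (since $at(C,J,S_0^*)$ fails) and true at $S_1^*$, so the next cause is $(drive(C,I,J),S_0^*)$. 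One further regression yields a condition of the form $(corrupted(C,s)\vee damaged(C,s))\wedge at(C,I,s)$ that already holds at the \emph{initial} situation $S_0^*$; since regression cannot proceed past an initial situation, the analysis terminates and $\mathcal{K}^*=\{(turn(C,J),S_1^*),(drive(C,I,J),S_0^*)\}$.

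Finally I would compare cardinalities: $|\mathcal{K}_{ac1}|=4$ while $|\mathcal{K}^*|=2$ --- intuitively, in the actual world the car must first be hacked, so $(hack(C),S_2)$ and the later $(turn(C,K),S_4)$ enter the chain, whereas in $S_0^*$ no hack is needed and the earlier turn already does the damage. By Definition~\ref{new01}, chains of unequal cardinality cannot be $K$-related. Since $\mathcal{K}^*$ is an achievement causal chain of the $K$-alternative setting $\langle\mathcal{D}_{ac},\sigma_1^*,\phi_1(s)\rangle$ with $K(agt,\sigma_1^*,\sigma_1)$, yet is not $K$-related to $\mathcal{K}_{ac1}$, the second clause of Definition~\ref{new02} fails, and hence the agent does not know in $\sigma_1$ that $\mathcal{K}_{ac1}$ is the achievement causal chain of $\mathcal{C}_{ac}$. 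The main obstacle is the causal-chain computation in the alternative world --- in particular, recognizing that pre-existing corruption shifts the primary cause to the \emph{earlier} $turn(C,J)$ and drops the $hack(C)$ link, which is exactly what changes the length; the $K$-accessibility bookkeeping and the concluding cardinality argument are then routine.
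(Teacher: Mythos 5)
Your proposal is correct and follows essentially the same route as the paper's own proof: the same witness $S_0^*$ (initially corrupted, undamaged, at $I$), the same propagation of $K$-accessibility to $S_5^*$ via the successor-state axiom for $K$, the same alternative chain $\mathcal{K}^*=\{(turn(C,J),S_1^*),(drive(C,I,J),S_0^*)\}$, and the same conclusion that it is not $K$-related to $\mathcal{K}_{ac1}$. Your explicit appeal to the cardinality mismatch ($4$ vs.\ $2$) merely makes precise the paper's remark that the two chains are ``clearly not $K$-related.''
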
 
 \begin{proof}
By Definition \ref{new02}, to prove this we need to show that there exists a situation $\sigma^*$ that is $K$-accessible from $\sigma_1$, i.e.\ $K(\sigma^*,\sigma_1)$, and an achievement causal chain 
$\mathcal{K}^*$ of the causal setting $\langle\mathcal{D}_{ac},\sigma^*,\phi_1(s)\rangle$ is not $K$-related to the achievement causal chain $\mathcal{K}_{ac1}$ (according to Definition \ref{new01}).  
Let us consider an initial situation where the agent $C$'s T-CAS is corrupted, $C$ is undamaged, $C$ is located at intersection $I$; let us call this situation $S_0^*$:  
\begin{equation}\label{prf1}corrupted(C,S_0^*)\wedge\neg damaged(C,S_0^*)\wedge at(C,I,S_0^*).\end{equation}
We claim that the successor to this situation after the actions in the situation $\sigma_1$ has happened, i.e.\ situation 
$S_5^*=do([drive(C,I,J),turn(C,J),hack(C),drive(C,J,K),turn(C,K)],S_1^*)$, as can be seen in Figure \ref{fig:evo2}, is indeed 
such a situation $\sigma^*$. To show this, we have to show that $K(S_5^*,\sigma_1)$ and that $\mathcal{K}_{ac1}$ and $\mathcal{K}^*$ are not $K$-related wrt the achievement of $\phi_1(s)$ and the 
action sequence in $\sigma_1.$
\par
We start by showing the former (see Figure \ref{fig:evo2}). Note that it follows from $\mathcal{D}_{ac}^K$ and (\ref{prf1}) that $S_0^*$ is $K$-accessible from the actual initial situation $S_0$, i.e.\ $\mathcal{D}_{ac}^K\models K(S_0^*,S_0).$ Moreover, it can be shown that $\mathcal{D}_{ac}^K$ entails that all the actions in $\sigma_1$ are known to be executable starting in $S_0$. 
Furthermore, since all the actions performed in $\sigma_1$ and $S_5^*$ are exactly the same, and since none of these actions are knowledge-producing/sensing 
actions, by the successor-state axiom for $K$, it follows that $S_5^*$ is retained in the $K$-relation in $\sigma_1$. Thus we have $K(S_5^*,\sigma_1)$.   
\par
Now, computing the achievement causal chain for causal setting $\langle\mathcal{D}_{ac},S_5^*,\phi_1\rangle$ using Definition 2 yields the causal chain 
$\mathcal{K}^*=\{(turn(C,J),S_1^*),(drive(C,I,J),S_0^*)\},$ as can be seen in the bottom part of Figure \ref{fig:evo0}. 
By Definition \ref{new01}, $\mathcal{K}_{ac1}$ and $\mathcal{K}^*$ are clearly not $K$-related wrt the achievement of $\phi_1(s)$ and action sequence 
$[drive(C,I,J),turn(C,J),hack(C),drive(C,J,K),turn(C,K)]$. 
\end{proof}   
The above theorem demonstrates that it is possible for the same effect brought about by the same sequence of actions to have different causes in different epistemic alternatives, as is expected. 
Put otherwise, when mental attitudes are concerned, theories of causation at different levels of (epistemic) nestings need not be related. In the words of Williamson 
\cite{Williamson06}, ``To say that causal connection is mental does \emph{not} imply that causality is subjective (in the logical sense)''. 
%
\section{Discussion}
The above notion of (objective) causality \cite{BatusovS18} has motivation that is similar to \cite{Gossler15,Wang15}, who also discuss causal analysis relative to traces. However, \cite{Gossler15,Wang15} 
work with less expressive languages, and unlike us they provide a counterfactual definition of causality. 
As mentioned earlier and shown in \cite{BatusovS18}, our definition above 
can correctly compute actual causes even for the more problematic examples with early preemption and overdetermination that create serious difficulties for the structural equations-based approach 
developed in \cite{Pearl98,Pearl00,HalpernP05,Halpern16}. 
\par
Based on this formal notion of causality, in this paper we proposed an account of epistemic causality within a formal theory of action. Our account allows agents to have incomplete initial knowledge. 
For instance, in our running example, initially the agent doesn't know whether the car's T-CAS system is corrupted or not. We defined what it means for an agent to know the causes of a known effect. We also showed 
that epistemic causality is a different notion from causality in the sense that given a trace, it is possible to have different causes of the same effect in different epistemic alternatives. Thus, as expected, 
the agent may or may not know the causes of an effect. 
\par
Recently, there has been some work that formalizes causality in an epistemic context. For example, while defining responsibility/blame in legal cases, Chockler et al.\ \cite{ChocklerFKL15} modeled an
 agent's uncertainty of the causal setting using an ``epistemic state'', which is a pair $(K,Pr)$, where $K$ is a set of causal settings and $Pr$ is a probability distribution over $K$. Their model is based on 
 structural equations. We on the other hand define epistemic causality based on the more expressive formalism proposed by Batusov and Soutchanski \cite{BatusovS18}. Moreover, unlike \cite{ChocklerFKL15}, our account incorporates a formal model of domain dynamics and knowledge change. This allows for interesting interplay between causality and knowledge. For instance, in our framework it is possible to specify a domain where the agent does not know the causes of an effect in some situation, but learns them after performing some sensing action. Some of our future work include analyzing such examples as well as defining responsibility and blame using our formalization.
\par
Here, we focus on knowledge and do not deal with belief. Traditionally, agents' knowledge is required to be true while agents are allowed to have incorrect beliefs \cite{Hintikka62}. 
Incorporating beliefs yields a more expressive framework, one that allows for causality relative to partially observable actions (and traces). 
Put otherwise, the agent can now consider different actions in different doxastic alternatives/belief-accessible worlds: given some situation $s$ in the actual narrative, as far as the agent is concerned, the action 
that was executed in situation $s$ can be any of those performed in one of her belief-accessible worlds (cf.\ Footnote 6, where we required the agent to know the action that happened in $s$). 
The agent can reason about actual causality (under some similar conditions specified in Definition \ref{new02}), even if she does not know the exact sequence of actions that has been performed 
since the initial situation. 
%
Also, more interesting interplay between objective and epistemic causality can now arise. For instance, an agent may perceive her action to be a cause for some effect $\phi$, but 
in reality it was not, since $\phi$ was over-determined due to her incorrect beliefs about the world.\footnote{Recall that unlike knowledge, agents' beliefs are not required to be true.} Similarly, an agent 
may think her action was not a cause, but in reality it was. While we think that much of our formalization can be extended to deal with beliefs, this requires handling belief revision, which complicates the 
framework further. We leave this for future.
\par
Finally, we focus in this paper on deterministic actions only. However, there are several proposals on how one can reason about stochastic actions in the situation calculus, e.g.\  \cite{BacchusHL99,BoutilierRP01,BelleL18}. Dealing with stochastic actions is future work.
\section*{Acknowledgements}
We thank Yves Lesp\'{e}rance and the anonymous reviewers for useful comments on an earlier version.  This work was supported in part by the National Science and Engineering Research Council of Canada and by the Faculty of Science at Ryerson University. 
\bibliographystyle{eptcs}
\bibliography{crest}
\end{document}